\documentclass[runningheads]{llncs}
\usepackage[T1]{fontenc}
\usepackage{graphicx}

\usepackage{amsmath,amsfonts,bm}

\def\1{\bm{1}}

\DeclareMathAlphabet{\mathsfit}{\encodingdefault}{\sfdefault}{m}{sl}
\SetMathAlphabet{\mathsfit}{bold}{\encodingdefault}{\sfdefault}{bx}{n}

\usepackage[utf8]{inputenc} %
\usepackage[hidelinks]{hyperref}       %
\usepackage{url}            %
\usepackage{booktabs}       %
\usepackage{amsfonts}       %
\usepackage{natbib}
\usepackage{nicefrac}       %
\usepackage{microtype}      %
\usepackage[dvipsnames]{xcolor}      %
\usepackage{wrapfig}
\usepackage[all]{foreign}

\definecolor{colorblind_blue}{rgb}{0.004, 0.449, 0.695}
\definecolor{colorblind_orange}{rgb}{0.867, 0.559, 0.020}
\definecolor{colorblind_green}{rgb}{0.008, 0.617, 0.449}
\definecolor{colorblind_red}{rgb}{0.832, 0.367, 0.000}

\definecolor{colorblind_dark_blue}{rgb}{0.0036, 0.4041, 0.6255}
\definecolor{colorblind_dark_orange}{rgb}{0.7803, 0.5031, 0.018}
\definecolor{colorblind_dark_green}{rgb}{0.0072, 0.5553, 0.4041}
\definecolor{colorblind_dark_red}{rgb}{0.7488, 0.3303, 0.000}

\usepackage{amsmath}
\usepackage{tikz}
\usetikzlibrary{shapes.geometric, arrows, positioning, fit, backgrounds, calc, patterns, decorations.pathmorphing}

\usepackage{pifont}%
\newcommand{\cmark}{\ding{51}}%

\usepackage{todonotes}

\usepackage{cleveref}
\crefformat{footnote}{#2\footnotemark[#1]#3}

\newif\ifarxiv
\arxivfalse

\title{Certified Neural Approximations\texorpdfstring{ \\ }{ }of Nonlinear Dynamics}

\newcommand*\samethanks[1][\value{footnote}]{\footnotemark[#1]}

\author{Frederik Baymler Mathiesen\inst{1}\thanks{Authors contributed equally to this article.} \orcidID{0000-0002-2243-0445} \and
Nikolaus Vertovec\inst{2}\samethanks \orcidID{0000-0003-4244-8623} \and
Francesco Fabiano\inst{2} \orcidID{0000-0002-1161-0336} \and
Luca Laurenti\inst{1}\textsuperscript{,}\inst{3} \orcidID{0000-0003-1190-6097} \and
Alessandro Abate\inst{2} \orcidID{0000-0002-5627-9093}}

\authorrunning{F. B. Mathiesen and N. Vertovec et al.}

\institute{Delft Center for Systems and Control, Delft University of Technology \\
\and
Department of Computer Science, University of Oxford \\
\and
The Italian Institute of Artificial Intelligence (AI4I) \\
\email{\{f.b.mathiesen, l.laurenti\}@tudelft.nl}\\
\email{\{nikolaus.vertovec, francesco.fabiano, alessandro.abate\}@cs.ox.ac.uk}}

\begin{document}

\maketitle

\begin{abstract}
Neural networks hold great potential to act as approximate models of nonlinear dynamical systems, with the resulting neural approximations enabling verification and control of such systems. However, in safety-critical contexts, the use of neural approximations requires formal bounds on their closeness to the underlying system. To address this fundamental challenge, we propose a novel, adaptive, and parallelizable verification method based on certified first-order models. Our approach provides formal error bounds on the neural approximations of dynamical systems, allowing them to be safely employed as surrogates by interpreting the error bound as bounded disturbances acting on the approximated dynamics. We demonstrate the effectiveness and scalability of our method on a range of established benchmarks from the literature, showing that it significantly outperforms the state of the art. Furthermore, we show that our framework can successfully address additional scenarios previously intractable for existing methods -- neural network compression and an autoencoder-based deep learning architecture for training  Koopman operators for the purpose of trajectory prediction.
\end{abstract}
\section{Introduction}
Nonlinear dynamical models are ubiquitous across science and engineering and play a central role in describing and designing complex cyber-physical systems~\citep{Alur2015}. However, to verify and control such systems, it is often necessary to construct an abstraction: an approximation that locally simplifies the model or relaxes its nonlinearities~\citep{Derler2012, Khalil2002, Sastry1999}. %
In general, an abstraction translates the \emph{concrete model}---the system under study---into a simpler \emph{abstract model} that is more amenable to analysis~\citep{Baier2008, Clarke2018}.
A common method for synthesizing abstractions is known as \textit{hybridization} and involves partitioning the state space into regions, each representing a state in a finite-state machine~\citep{Althoff2008, Asarin2007, Bak2016, Dang2010, Frehse2005, GarciaSoto2020, Henzinger1995, Li2020, Majumdar2012, Prabhakar2015, Roohi2016}. Neural networks with ReLU activations have emerged as a particularly effective approach to \textit{hybridization}, as each network configuration implicitly induces a partition of the input domain into convex polytopes~\citep{NeuralAbstractions, Goujon2024, Villani2023}. This enables simultaneous learning of both the partitioning and the simplified dynamics. %

For neural abstractions to be practically useful, properties inferred from the abstraction---such as those related to reachability or safety---must reliably transfer to the original system. Simulation-based techniques fall short, as they are non-exhaustive and may miss unsafe behaviour. Formal verification provides a sound alternative by exhaustively analyzing all possible inputs and outputs. Prior work has used SMT (Satisfiability Modulo Theories) solvers for this task~\citep{Franzle2007, NeuralAbstractions, CARe}. SMT extends the Boolean Satisfiability Problem (SAT) to more complex formulas such as those involving linear real arithmetic or integers. This makes SMT solvers an indispensable tool for a range of applications; however, for neural network verification, the computational cost of SMT solvers severely limits the scale and expressivity of verifiable networks.

To overcome these limitations, we introduce a scalable framework for verification of neural abstractions that avoids reliance on expensive SMT solvers. Our method constructs certified first-order Taylor models to tightly bound a nonlinear system's behaviour, which enables us to employ neural network verification tools based on linear bound propagation to certify that the network's output remains sufficiently close to the linearized dynamics.
The use of Taylor models has achieved considerable success in reachability analysis for hybrid systems, particularly in the context of neural network-controlled systems~\citep{10.1145/3358228, 10.1007/978-3-030-81685-8_11, 10.1007/978-3-031-19992-9_27}. However, while these previous works have considered \emph{local} reachability problems, we address a \emph{global} closeness problem to obtain the same formal guarantees as prior work employing SMT solvers. To this end, we introduce a parallelizable partitioning and refinement scheme -- achieving substantial verification speedups without loss of formal guarantees. Our approach allows us to handle higher-dimensional systems (up to 7D), and considerably larger neural networks than state-of-the-art methods. In summary, our approach addresses the primary bottleneck in neural abstractions \citep{NeuralAbstractions}, namely the scalability of the $\epsilon$-closeness verification.

To exploit these new capabilities to handle higher-dimensional systems and larger networks, we extend neural abstractions to learning Koopman operators~\citep{Brunton_2016, Brunton2022} (Section~\ref{sec:koopman}). This application, which enables trajectory-level reasoning by representing nonlinear dynamics as linear operators in a high-dimensional space, previously posed a significant verification challenge, as the network's output represents predicted trajectory points (in our example, involving a final layer with over 100 states).
Finally, to demonstrate the versatility of our framework, particularly for more general function approximation tasks, we extend neural abstractions beyond their native developments for dynamical systems, demonstrating their effective application to neural network compression (Section~\ref{sec:compression}).

Our contributions are summarised as follows:
\begin{enumerate}
    \item We introduce a novel method based on certified linearizations to eliminate the reliance on SMT solvers capable of handling nonlinear real arithmetic, thereby removing the primary computational bottleneck in prior approaches.
    \item We introduce a parallelizable refinement strategy that enables adaptive verification of neural abstractions with nonuniform accuracy across the input domain. 
    \item We demonstrate that our approach outperforms existing methods on a variety of benchmarks. %
    \item We demonstrate the effectiveness of our approach on two novel applications that are beyond the capability of state-of-the-art methods: neural network compression and the discovery of Koopman operators.
\end{enumerate}

We begin in Section~\ref{sec:neural_abstractions} by formally introducing neural abstractions, followed by presenting our approach to certification in Section~\ref{sec:abstractions_certification}. %

\section{Neural approximations of nonlinear dynamics}
\label{sec:neural_abstractions}
We begin by introducing the system dynamics, for which we will synthesise neural network abstractions over a bounded domain. Let \(\mathcal{X} \subset \mathbb{R}^n\) denote the bounded input domain of interest, and suppose \( f: \mathcal{X} \rightarrow \mathbb{R}^m \) is a continuous (nonlinear) function describing the system's dynamics. 
We focus on two classes of systems:
\begin{itemize}
    \item \textbf{Continuous-time nonlinear systems}, described by differential equations of the form
    \(
        \frac{dx}{dt} = f(x), \quad x \in \mathcal{X};
    \)
    \item \textbf{Discrete-time nonlinear systems}, described by difference equations of the form
    \(
        x_{k+1} = f(x_k), \quad x_k \in \mathcal{X}.
    \)
\end{itemize}

Given a dynamical system as described above, a neural abstraction is an \(\epsilon\)-close approximation of the dynamical system, as formally defined in Definition~\ref{def:neuralAbstraction} below, which is a generalization of the definition of neural abstractions introduced in \cite{NeuralAbstractions} for continuous-time dynamical systems.
\begin{definition}[Neural Abstraction]
\label{def:neuralAbstraction}
    Consider a dynamical system described by function $f: \mathbb{R}^n \rightarrow \mathbb{R}^m$ and let $\mathcal{X} \subset \mathbb{R}^n$ be a region of interest. A feed-forward neural network $\mathrm{N}: \mathbb{R}^n \rightarrow \mathbb{R}^m$ defines a neural abstraction, also called a neural approximation, of $f$ with error bound $\epsilon > 0$ over $\mathcal{X}$, if it holds that \(\forall x \in \mathcal{X}: \; \|f(x) - \mathrm{N}(x)\| \leq \epsilon\) where \(\|\cdot\|\) is the \(L_\infty\)-norm\footnote{For the remainder of the paper, unless otherwise specified, all norms are \(L_\infty\).}. 
\end{definition}
According to Definition~\ref{def:neuralAbstraction}, a neural abstraction of a dynamical system describes a dynamical system with a bounded additive disturbance \( d \), such that any trajectory (solution to the differential equation) of the original dynamical system \( f \) is also a trajectory of the perturbed system. Specifically, in the case where \( f \) describes a continuous-time system, we have the following equation for the perturbed system:
\begin{equation} \label{eq:cont_abstraction}
    \frac{dx}{dt} = \mathrm{N}(x) + d, \quad \|d\| \leq \epsilon, \quad x \in \mathcal{X},
\end{equation}
where \( \epsilon \) represents the maximal deviation between the neural network approximation \( \mathrm{N}(x) \) and the original function \( f(x) \). In the discrete-time case, where \( f \) defines an update rule \( x^{k+1} = f(x^k) \), any trajectory (solution to the difference equation) of the original system \( f \) is also a trajectory of the following discrete-time system:
\begin{equation} \label{eq:disc_abstraction}
    x^{k+1} = \mathrm{N}(x^k) + d, \quad \|d\| \leq \epsilon, \quad x \in \mathcal{X}.
\end{equation}
As the disturbance \( d \) is bounded by \( \epsilon \), the original system response, defined by \( f \), is always contained within that of the abstraction, defined by \( \mathrm{N} \) with disturbance \( d \). Thus, the abstraction is sound, which enables formal guarantees to transfer from the abstraction to the concrete model.
\begin{remark}
    The applicability of our framework is not contingent on access to \(f\) not its derivatives, but only on access to linear bounding functions, as we will show in Section~\ref{sec:abstractions_certification}. Linear bounding functions can be constructed in various ways with varying degrees of conservatism depending on the available information about \(f\). We describe three such methods in Appendix \ref{app:certified_lin}. Furthermore, the linear bounding functions can trivially encapsulate bounded disturbances, thus cover the setting of \cite{NeuralAbstractions}. \qed
\end{remark}

\subsection{Training}\label{sec:training}
To obtain a neural network approximation of a dynamical system, we train a neural network \(\mathrm{N}\) to minimize both the mean and maximum approximation error over a batch of sampled inputs \(\{x_1, \ldots, x_M\}\). To this end, we assume for the remainder of the paper that the function \(f\) has bounded output and define the following loss function:
\begin{equation} \label{eq:loss}
    \mathcal{L} = \frac{1}{M} \sum_{l=1}^M  \|f(x_l) - \mathrm{N}(x_l)\|_2 + \lambda_{\max} \max_{l \in \{1, \ldots, M\}} \|f(x_l) - \mathrm{N}(x_l)\|_\infty,
\end{equation}
where the parameter \(\lambda_{\max} = 0.001\) balances the trade-off between minimizing the average error and controlling the worst-case error across the sampled domain. Note that including the $\| \cdot \|_{\infty}$ term in \eqref{eq:loss} greatly improves the sample efficiency, as the goal of the verification is indeed to verify a bound on the $L_\infty$-norm. We focus on training neural abstractions with ReLU and LeakyReLU activation functions, although our approach is compatible with more general activation functions, as permitted by the underlying solver (neural network verification tool). 
Further details regarding the network architecture, training procedure, and hyperparameters are provided in Section~\ref{sec:experiments} and Appendix~\ref{app:hyperparam}. Once a neural network \( \mathrm{N} \) has been trained, the central challenge lies in certifying the accuracy, \ie formally establishing the relation \(\|f(x) - \mathrm{N}(x)\| \leq \epsilon\) between the neural network abstraction and the concrete model. In what follows, we present our primary contribution, a scalable verification approach for this problem based on an adaptive refinement of first-order models. In Section~\ref{sec:experiments}, we empirically show how this framework substantially outperforms the state of the art.

\section{Certification of \texorpdfstring{\(\epsilon\)}{epsilon}-closeness}
\label{sec:abstractions_certification}
We now introduce our verification approach that leverages local first-order models of \( f(x) \), thereby enabling the application of modern techniques for neural network verification. %
To perform the verification, we will seek to prove that no counterexample against \(\epsilon\)-closeness exists. Thus we seek an assignment of the negation of our desired specification, \ie
\begin{equation} \label{eq:phi}
    \exists x : \: \underbrace{x \in \mathcal{X} \land \|f(x) - \mathrm{N}(x)\| > \epsilon}_{\phi}. 
\end{equation}
If we find an assignment for \(x\) such that the formula \(\phi\) is \textit{satisfiable}, then we can establish that \(\mathrm{N}\) is \textbf{not} a valid neural abstraction for a given accuracy \(\epsilon\). As the search for satisfying assignments is exhaustive, failure to find an assignment constitutes a proof that no such assignment exists, and thus \(\mathrm{N}\) is a valid neural abstraction for a given accuracy \(\epsilon\).
The formula in Equation \eqref{eq:phi} is a predicate logic (first-order logic) formula, conventionally requiring an SMT solver for verification. The standard setting for neural network verification \citep{vnncomp} considers reasoning over inclusion properties (propositional logic), making their application to this verification task non-trivial.

In Section~\ref{subsec:first_order_models}, we will describe the first-order models and how they can be used in the context of verification, followed by certificate refinement in Section~\ref{subsec:certificate_refinement} to combat conservatism introduced by the first-order models.

\begin{remark}
    The selection of $\epsilon$ can be performed empirically, based on the maximum error observed during training, or predefined according to strict application requirements. Notably, our proposed approach allows for an efficient search for the optimal $\epsilon$ within a given computational budget. \qed 
\end{remark}

\subsection{First-order models}
\label{subsec:first_order_models}
Given that the function \( f \) may include nonlinear terms, finding a satisfying assignment of \( \phi \) in Equation~\eqref{eq:phi} typically requires reasoning over quantifier-free nonlinear real arithmetic formulae. This is computationally challenging and does not scale efficiently with problem complexity or the number of optimization variables. To address this, we introduce an over- and under-approximation of the dynamics of \( f \) using local first-order Taylor expansions of the vector field \( f \). The choice of first-order models is a balance in the trade-off between expressivity, since we are verifying input-output relations, and maintaining linearity to enable formal verification.

We will adaptively partition the domain of interest \( \mathcal{X} \) into hyperrectangles, which are represented as weighted \( L_\infty \)-balls.
\begin{definition}
    Given a hyperrectangle with radius \( \delta \in \mathbb{R}^n_{\geq 0} \) and center \( c \in \mathbb{R}^n \), the weighted \( L_\infty \)-ball around \( c \), denoted by \( \mathcal{H}_{\delta}(c) \), is defined as
    \[
        \mathcal{H}_{\delta}(c) = \{ x \in \mathbb{R}^n : |x - c| \leq \delta \},
    \]
    where \(|\cdot|\) is the element-wise absolute value and \( \leq \) is interpreted element-wise. Similarly, we can define the hyperrectangle by its lower and upper corners, \( \mathcal{H}^{\min} = c - \delta \) and \( \mathcal{H}^{\max} = c + \delta \), respectively, as
    \(
        \mathcal{H}_{\delta}(c) = \{ x \in \mathbb{R}^n : \mathcal{H}^{\min} \leq x \leq \mathcal{H}^{\max} \}.
    \)
\end{definition}

The choice of partitioning will be discussed in more detail in the following section. For now, let us introduce the local first-order Taylor expansion, including an error bound.
\begin{proposition}[Certified first-order Taylor Expansion]\label{prop:cert_taylor_expansion}
    Let \( f : \mathbb{R}^n \to \mathbb{R}^m \) be a continuously differentiable function, and let \( \mathcal{H}_{\delta}(c) \) be a hyperrectangle centered at \( c \in \mathbb{R}^n \) with radius \( \delta \). Then, there exists a hyperrectangle \( \mathcal{R} \subseteq \mathbb{R}^m \) such that for all \( x \in \mathcal{H}_{\delta}(c) \), the following relation holds:
    \[
        f(x) \in \left( f(c) + J_f(c) (x - c) \right) \oplus \mathcal{R},
    \]
    where \( \oplus \) denotes the Minkowski sum and $J_f(c)$ is the Jacobian of $f$ evaluated at $c$.
\end{proposition}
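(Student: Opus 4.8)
The plan is to reduce the containment statement to a componentwise bound on the Taylor remainder, establish existence of the enclosing box $\mathcal{R}$ by compactness, and then indicate how the same remainder can be bounded explicitly (the ``certified'' part) via the mean value theorem. First I would define the remainder $r : \mathcal{H}_\delta(c) \to \R^m$ by
\[
    r(x) = f(x) - \bigl( f(c) + \nabla f(c)(x - c) \bigr),
\]
and observe that the claimed relation $f(x) \in (f(c) + \nabla f(c)(x-c)) \oplus \mathcal{R}$ holds for all $x \in \mathcal{H}_\delta(c)$ precisely when $r(x) \in \mathcal{R}$ there. It therefore suffices to produce a hyperrectangle $\mathcal{R} = \prod_{j=1}^m [\underline{r}_j, \overline{r}_j] \subseteq \R^m$ that encloses the range of $r$.

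For existence I would argue by compactness: since $f$ is $C^1$, both $f$ and $\nabla f$ are continuous, so $r$ is continuous, while $\mathcal{H}_\delta(c)$ is closed and bounded, hence compact. The extreme value theorem then guarantees that each scalar component $r_j$ attains a finite minimum $\underline{r}_j$ and maximum $\overline{r}_j$ on $\mathcal{H}_\delta(c)$, and taking $\mathcal{R} = \prod_{j=1}^m [\underline{r}_j, \overline{r}_j]$ gives $r(x) \in \mathcal{R}$ for every $x$, which is exactly the assertion.

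To make the enclosure constructive I would bound $r$ directly. Fixing $x \in \mathcal{H}_\delta(c)$ and applying the mean value theorem to each scalar component $f_j$ along the segment from $c$ to $x$ — which lies in $\mathcal{H}_\delta(c)$ because the box is convex — yields a point $\xi_j$ on that segment with $f_j(x) - f_j(c) = \nabla f_j(\xi_j)(x-c)$, so that
\[
    r_j(x) = \bigl( \nabla f_j(\xi_j) - \nabla f_j(c) \bigr)(x - c).
\]
Using $|x_i - c_i| \le \delta_i$ together with the triangle inequality, I would then set
\[
    \rho_j = \sum_{i=1}^n \Bigl( \sup_{y \in \mathcal{H}_\delta(c)} \lvert \partial_i f_j(y) - \partial_i f_j(c) \rvert \Bigr)\, \delta_i,
\]
so that $\lvert r_j(x) \rvert \le \rho_j$ and $\mathcal{R} = \prod_{j=1}^m [-\rho_j, \rho_j]$ is a symmetric box with the required property, now expressed purely through an over-approximation of the gradient variation over the box.

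The hard part is not the existence argument, which is immediate from compactness, but obtaining a tight and genuinely computable value for the gradient-deviation terms $\sup_{y} \lvert \partial_i f_j(y) - \partial_i f_j(c) \rvert$. For general $f$ this requires an interval enclosure of $\nabla f$ over $\mathcal{H}_\delta(c)$ — obtained, for instance, through interval arithmetic, a second-derivative bound when $f$ is $C^2$, or the linear bounding functions referenced in the remark above — and the conservatism of $\mathcal{R}$, and hence of the overall certificate, is entirely governed by how tightly this enclosure can be computed.
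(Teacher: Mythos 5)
Your proof is correct, but it follows a slightly different route from the paper's. The paper disposes of this proposition in one line, appealing to Taylor's theorem for multivariate functions with the Lagrange form of the remainder, and the appendix makes this concrete by bounding $|\mathcal{R}_j(x)| \le \tfrac{1}{2} M_j \lVert x - x_0 \rVert_2^2$ where $M_j$ bounds the spectral norm of the Hessian of $f_j$ over the box --- an argument that requires $f$ to be \emph{twice} continuously differentiable (the paper explicitly restricts the efficient computation of $\mathcal{R}$ to that case). You instead split the claim into a pure existence statement, settled by continuity of the remainder $r$ on the compact box via the extreme value theorem, and a constructive enclosure obtained from the mean value theorem applied along the segment from $c$ to $x$, which bounds $r_j$ by the variation of $\nabla f_j$ over the box. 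Both steps are valid, and your version has the advantage of operating under exactly the $C^1$ hypothesis stated in the proposition, where the Lagrange second-order remainder is not available; the paper's Hessian-based bound, when $f$ is $C^2$, buys a remainder that shrinks quadratically in $\delta$ (which is what drives the refinement loop in Section~\ref{subsec:certificate_refinement}), whereas your gradient-deviation bound degrades only linearly in $\delta$ times the modulus of continuity of $\nabla f$ unless you further bound that modulus by a second-derivative estimate, as you note at the end.
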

Computing \( \mathcal{R} \) can be done efficiently when \( f \) is twice continuously differentiable using the Lagrange error bound (see Appendix~\ref{app:taylor_smooth} for details). The proof follows directly from Taylor’s theorem for multivariate functions, along with the Lagrange error bound for higher-order terms~\citep{Joldes2011}. For the remainder of the paper, we use the subscript indices \( i \in \{1, \ldots, n\} \) when referring to the input dimensions of the function or the neural network, and \( j \in \{1, \ldots, m\} \) when referring to the output dimensions.  The first-order Taylor expansion in Proposition~\ref{prop:cert_taylor_expansion} provides the following sufficient condition for a valid neural abstraction. 

\begin{theorem}\label{thm:1}
Let \( f : \mathbb{R}^n \to \mathbb{R}^m \), \( \mathrm{N} : \mathbb{R}^n \to \mathbb{R}^m \), and let \( \mathcal{H}_\delta(c) \subset \mathbb{R}^n \) be a hyperrectangle centered at \( c \) with radius \( \delta \). 
Let \(f(c) + J_f(c)(x - c) \oplus \mathcal{R}\) be a certified Taylor expansion for $f$ in $\mathcal{H}_\delta(c)$.
If for each output dimension \( j \in \{1, \ldots, m\} \), there does not exist a state \( x \) such that either of the following inequalities is satisfied: %
\begin{subequations}
    \begin{align}
        &x \in \mathcal{H}_{\delta}(c) \land 
        \textcolor{colorblind_red}{f_j(c)} 
        + \textcolor{colorblind_red}{J_{f_j}(c)} \cdot (x - \textcolor{colorblind_red}{c}) 
        + \textcolor{colorblind_red}{\mathcal{R}^{\mathrm{max}}_j} 
        - \mathrm{N}_j(x) \geq \epsilon, \label{eq:cond_linf_norm1}\\
        &x \in \mathcal{H}_{\delta}(c) \land 
        \mathrm{N}_j(x)
        - \textcolor{colorblind_red}{f_j(c)} 
        - \textcolor{colorblind_red}{J_{f_j}(c)} \cdot (x - \textcolor{colorblind_red}{c}) 
        - \textcolor{colorblind_red}{\mathcal{R}^{\mathrm{min}}_j} \geq \epsilon, \label{eq:cond_linf_norm2}
    \end{align}
\end{subequations}
then \( \mathrm{N} \) is an \( \epsilon \)-accurate neural abstraction of \( f \) over \( \mathcal{H}_\delta(c) \), \ie
\(
    \| f(x) - \mathrm{N}(x) \| \leq \epsilon, \forall x \in \mathcal{H}_\delta(c).
\)
\end{theorem}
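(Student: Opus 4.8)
The plan is to reduce the $L_\infty$ goal to a collection of scalar two-sided inequalities, one pair per output coordinate, and then discharge each direction by chaining the certified Taylor sandwich against the two negated satisfiability conditions. Since $\|\cdot\|$ denotes the $L_\infty$-norm, the conclusion $\|f(x) - \mathrm{N}(x)\| \leq \epsilon$ for all $x \in \mathcal{H}_\delta(c)$ is equivalent to showing that, for every coordinate $j \in \{1, \ldots, m\}$ and every $x \in \mathcal{H}_\delta(c)$, both $f_j(x) - \mathrm{N}_j(x) \leq \epsilon$ and $\mathrm{N}_j(x) - f_j(x) \leq \epsilon$ hold. So it suffices to fix an arbitrary $j$ and $x$ and establish these two scalar bounds.

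First I would unpack the certified Taylor expansion from Proposition~\ref{prop:cert_taylor_expansion}. Writing $L_j(x) := f_j(c) + \nabla f_j(c)(x-c)$ for the affine part, the membership $f(x) \in (f(c) + \nabla f(c)(x-c)) \oplus \mathcal{R}$ together with the fact that $\mathcal{R}$ is the hyperrectangle with corners $\mathcal{R}^{\min}$ and $\mathcal{R}^{\max}$ yields the coordinatewise sandwich $L_j(x) + \mathcal{R}^{\min}_j \leq f_j(x) \leq L_j(x) + \mathcal{R}^{\max}_j$, valid for all $x \in \mathcal{H}_\delta(c)$. This is the only place the Taylor guarantee enters, and it is exactly what converts the nonlinear quantity $f_j(x)$ into affine expressions amenable to verification.

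Next I would chain these two bounds against the hypotheses. Because no $x \in \mathcal{H}_\delta(c)$ satisfies~\eqref{eq:cond_linf_norm1}, we have $L_j(x) + \mathcal{R}^{\max}_j - \mathrm{N}_j(x) < \epsilon$ throughout $\mathcal{H}_\delta(c)$; combined with the upper half of the sandwich this gives $f_j(x) - \mathrm{N}_j(x) \leq L_j(x) + \mathcal{R}^{\max}_j - \mathrm{N}_j(x) < \epsilon$. Symmetrically, because no $x$ satisfies~\eqref{eq:cond_linf_norm2}, we have $\mathrm{N}_j(x) - L_j(x) - \mathcal{R}^{\min}_j < \epsilon$, and the lower half of the sandwich gives $\mathrm{N}_j(x) - f_j(x) \leq \mathrm{N}_j(x) - L_j(x) - \mathcal{R}^{\min}_j < \epsilon$. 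Together these show $|f_j(x) - \mathrm{N}_j(x)| < \epsilon$. Since $j$ and $x$ were arbitrary, taking the maximum over $j$ recovers $\|f(x) - \mathrm{N}(x)\| \leq \epsilon$ on all of $\mathcal{H}_\delta(c)$.

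There is no deep obstacle here; the argument is a direct sandwich-and-chain. The only point requiring care is the bookkeeping of inequality directions—ensuring that the $\mathcal{R}^{\max}$ branch is paired with the upper Taylor bound and condition~\eqref{eq:cond_linf_norm1}, while the $\mathcal{R}^{\min}$ branch is paired with the lower Taylor bound and condition~\eqref{eq:cond_linf_norm2}. One should also note that the negated conditions deliver strict inequalities $< \epsilon$, which are harmless since the claim only asserts $\leq \epsilon$.
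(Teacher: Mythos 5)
Your proof is correct and follows essentially the same route as the paper's own argument: unpack the certified Taylor expansion into a coordinatewise sandwich on $f_j(x)$, chain the upper bound against the negation of~\eqref{eq:cond_linf_norm1} and the lower bound against the negation of~\eqref{eq:cond_linf_norm2}, and take the maximum over $j$. Your bookkeeping of the strict versus non-strict inequalities is a nice (harmless) refinement of the paper's presentation.
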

The proof of Theorem~\ref{thm:1} is provided in Appendix~\ref{app:proof_thm_na}. Both \( f \) and \( J_f \) are evaluated at the center point, \( c \), of the hyperrectangle \( \mathcal{H}_{\delta}(c) \), ensuring that all terms highlighted in \textcolor{colorblind_red}{orange} in Equations~\eqref{eq:cond_linf_norm1} and~\eqref{eq:cond_linf_norm2} remain fixed for a given hyperrectangle. In contrast, only the terms highlighted as blue in the equations vary with the specific choice of \( x \in \mathcal{H}_\delta(c) \). As a result, the expression \( f_j(c) + J_{f_j}(c) (x - c) + \mathcal{R}^{\mathrm{max}/\mathrm{min}}_j \) becomes linear. This allows Theorem~\ref{thm:1} to be applied as a relaxation of the nonlinear predicate \( \phi \) from Equation~\eqref{eq:phi}, thereby enabling formal verification to proceed without relying on SMT solvers capable of reasoning over nonlinear real arithmetic. Specifically, we employ Marabou 2.0~\citep{Marabou2.0}, which implements an extension of the Simplex algorithm that was originally developed to solve linear programs \citep{dantzig2002linear}, to verify the satisfiability of Equations~\eqref{eq:cond_linf_norm1} and~\eqref{eq:cond_linf_norm2}. 
The choice of using Marabou is due to its native support of linear surrogate relations between the input and output, unlike most other neural network verification tools \cite{vnncomp}. However, we stress that the emphasis of this paper is the linearization of the original dynamics $f(x)$ and the certificate refinement (see Subsection~\ref{subsec:certificate_refinement}) to enable verification of $\epsilon$-closeness over a large, dense input domain. Hence, the choice of verification tool for Equations~\eqref{eq:cond_linf_norm1} and~\eqref{eq:cond_linf_norm2} is secondary. 

Since the domain \( \mathcal{X} \) can be over-approximated by a finite union of hyperrectangles, \ie
\(
    \mathcal{X} \subseteq \bigcup_{\iota=1}^I \mathcal{H}_{\delta^\iota}(c^\iota),
\)
we can perform verification locally within each hyperrectangle \( \mathcal{H}_{\delta^\iota}(c^\iota) \). By applying a local first-order Taylor expansion within each region and bounding the remainder using \(\mathcal{R}^{\mathrm{max}}\) and \(\mathcal{R}^{\mathrm{min}}\), we obtain tighter bounds on the approximation error compared to approximating over the full domain \(\mathcal{X}\). This localized approach effectively reduces the conservatism introduced by the approximation, \ie omitting higher-order derivative terms, while maintaining soundness of the verification process.
\begin{remark}
We could allow the bound \( \epsilon \) to vary over the domain \( \mathcal{X} \), selecting different values of \( \epsilon \) for each partition \( \mathcal{H}_{\delta^\iota}(c^\iota) \). This would lead to a state-dependent disturbance in Equations~\eqref{eq:cont_abstraction} and~\eqref{eq:disc_abstraction}. Similarly, we could allow different \( \epsilon \) for each output dimension, \ie output-weighted \(\epsilon\)-closeness. However, for the sake of clarity and simplicity in the exposition, we omit this variation of \( \epsilon \). \qed 
\end{remark}

\subsection{Certificate refinement}
\label{subsec:certificate_refinement}

\begin{figure}[t]
    \centering
    \includegraphics[width=0.8\linewidth, alt={Branch-and-bound flowchart of the procedure in Section~\ref{sec:abstractions_certification}. First, a subregion is popped from the stack. Then, a first-order Taylor model is computed. If the gap is greater than $\epsilon$, then the subregion is split and put back on the stack. Otherwise, the Taylor model is compared against the neural network according to Equations~\eqref{eq:cond_linf_norm1} and \eqref{eq:cond_linf_norm2}. If a satisfying assignment is found, then we check if the assignment is a proper counter-example with the true dynamics $f(x)$; if $\| f(x) - N(x) \| \leq \epsilon$, then the counter-example is superious and we split the subregion further, otherwise we mark it is a proper counter-example. If instead, it is proven that no satisfying assignment exists for Equations~\eqref{eq:cond_linf_norm1} and \eqref{eq:cond_linf_norm2}, then $\| f(x) - N(x) \| \leq \epsilon$ is guaranteed within the subregion.}]{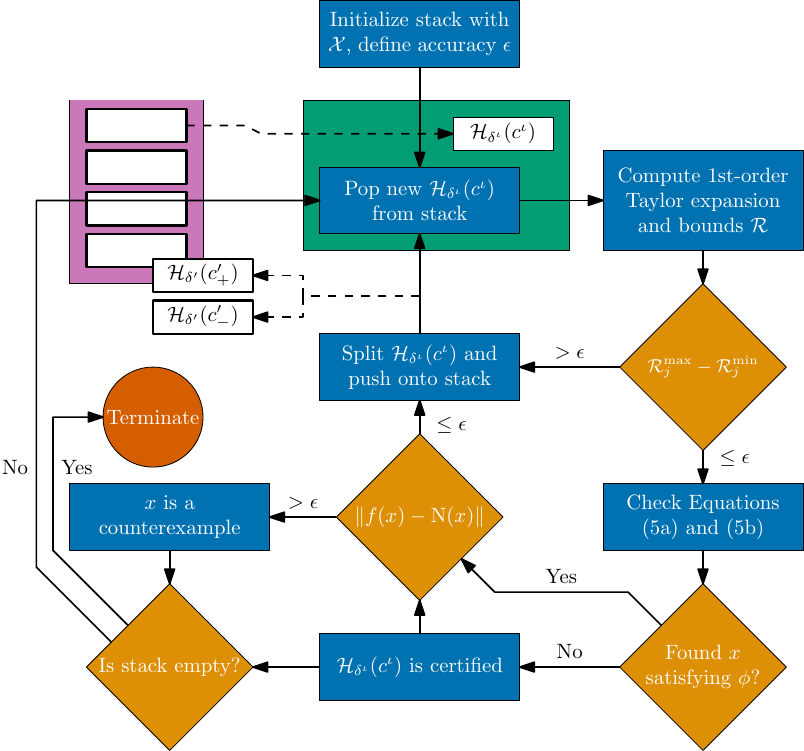}
    \caption{Graphical representation of the neural abstraction verification procedure with certificate refinement. For any subregion $\mathcal{H}_{\delta^\iota}(c^\iota)$, if an $x \in \mathcal{H}_{\delta^\iota}(c^\iota)$ satisfying $\phi$ is found, then we check if $x$ is a proper counter-example with respect to the true dynamics $f(x)$; if $\| f(x) - N(x) \| \leq \epsilon$, then the counter-example is superious due to the linearization and we split the subregion further, otherwise we mark it is a proper counter-example. If instead, it is proven that no satisfying assignment exists for Equations~\eqref{eq:cond_linf_norm1} and \eqref{eq:cond_linf_norm2}, then $\| f(x) - N(x) \| \leq \epsilon$ is guaranteed for all $x \in \mathcal{H}_{\delta^\iota}(c^\iota)$.
    }
    \label{fig:verification}
\end{figure}

In the previous section, we introduced first-order Taylor expansions to derive conservative over- and under-approximations of the dynamics \( f \). These approximations, captured in Equations~\eqref{eq:cond_linf_norm1} and~\eqref{eq:cond_linf_norm2}, consider the worst-case realizations of the error term \( r \in \mathcal{R} \). Consequently, when we compute the bounds \( \mathcal{R}^{\mathrm{max}}_j \) and \( \mathcal{R}^{\mathrm{min}}_j \), the counterexample \( x \) found may not always satisfy the formula \( \phi \) from Equation~\eqref{eq:phi}. This happens because the error bounds derived from the Taylor expansion may be overly conservative. To address this issue, we propose a refinement strategy that partitions each hyperrectangle locally, enabling tighter approximations of the dynamics and reducing conservatism.
The certification and partitioning strategy is illustrated in Figure~\ref{fig:verification}. The decision to partition a hyperrectangle into two separate hyperrectangles, referred to as a split, results from one of two conditions:
\begin{enumerate}
    \item the Taylor remainder term is too conservative, \ie \(\mathcal{R}^{\mathrm{max}}_j - \mathcal{R}^{\mathrm{min}}_j > \epsilon\),
    \item a counterexample \(x\) does not satisfy \(\| f(x) - \mathrm{N}(x) \|_j > \epsilon\).
\end{enumerate}
In the first case, we split the hyperrectangle based purely on the conservatism of the Taylor approximation, which can be done without reasoning over the neural network, while in the second case, the decision to split depends on the outcome of the network verification. If \(x\) satisfies \(\| f(x) - \mathrm{N}(x) \| > \epsilon\), no further splitting is necessary and \(x\) is returned as a proper counterexample.

When splitting a hyperrectangle, it is necessary to determine along which axis the split should occur. We choose this axis by prioritizing input dimensions according to their contribution to the Taylor approximation error in the output. Specifically, we first identify input dimensions that appear in nonlinear terms of the output of interest, $f_j(x)$, by analysing the dependency graph of $f$. Then, for each of those dimensions \( i \), we evaluate their contribution to the approximation error by perturbing the center point \( c \) of the hyperrectangle along that dimension. The perturbed point, denoted \( c' \), is defined such that \( c'_l = c_l \) for all \( l \neq i \), and \( c'_i = c_i + h_i \), where \( h_i \in (0, \delta_i) \) is a small, fixed perturbation magnitude. For each such perturbation, we evaluate the absolute error between the first-order Taylor model and the true dynamics \( f \) at \( c' \). While the splitting strategy prioritizes axes based on their contribution to the Taylor remainder, the procedure is relaxed to eventually split on all axes that enter $\mathrm{N}_j(x)$ nonlinearly, as this can impact the execution time of Marabou. For the selected input dimension \( i \), we split the original hyperrectangle \(\mathcal{H}_{\delta}(c)\) into two smaller hyperrectangles. These are centered at \( c + (\delta - \delta') \) and \( c - (\delta - \delta') \), respectively, where the new radius vector \(\delta'\) is defined elementwise as:
\[
\delta'_l = 
\begin{cases}
\delta_l & \text{if } l \neq i, \\
\frac{\delta_l}{2} & \text{if } l = i.
\end{cases}
\]
This effectively halves the size of the hyperrectangle along the selected axis \(i\), while keeping the width in all other dimensions unchanged. Since each input dimension \(x_i\) for \(i \in \{1, \ldots, n\}\) can influence each output dimension \(f_j(x)\) for \(j \in \{1, \ldots, m\}\) differently, we perform verification and refinement separately for each output dimension. Note that for other $L_p$-norms, we cannot decouple output dimensions as above, however, the rest of our verification procedure in this section remains applicable.

The proposed partitioning strategy adapts the size of the hyperrectangles locally according to the nonlinearity of the function \(f(x)\) based on the dependency graph and the perturbed first-order Taylor remainder. 
As illustrated in Figure~\ref{fig:non_lipschitz}, for the component \(f_1(x)\), which is linear, a single hyperrectangle suffices to certify the \(\epsilon\)-closeness over the entire domain \(\mathcal{X}\). In contrast, the second component, \(f_2(x)\), contains highly nonlinear terms that necessitate finer partitioning in regions where linear approximations are no longer sufficiently tight.
For many real-world systems, this targeted approach avoids the worst-case exponential growth and scales far more effectively than methods that cannot leverage this decoupling. To illustrate this, consider the Jet Engine dynamics in Appendix~\ref{app:dynamics_jet_engine}. The dynamics of $\dot{x}$ are coupled, yet the nonlinearity only appears in $x$, while the dynamics in $\dot{y}$ are linear. Our certification refinement strategy capitalizes on this, resulting in fast and efficient verification.
\begin{figure}
    \centering
    \includegraphics[width=\linewidth, alt={On the left, a linear function in two inputs (one output) is depicted with one box for the tight linearization. On the right, the depicted function is linear in $y$ but not $x$. For tighter linear bounding functions, the input space is partition on the $x$-axis (7 boxes).}]{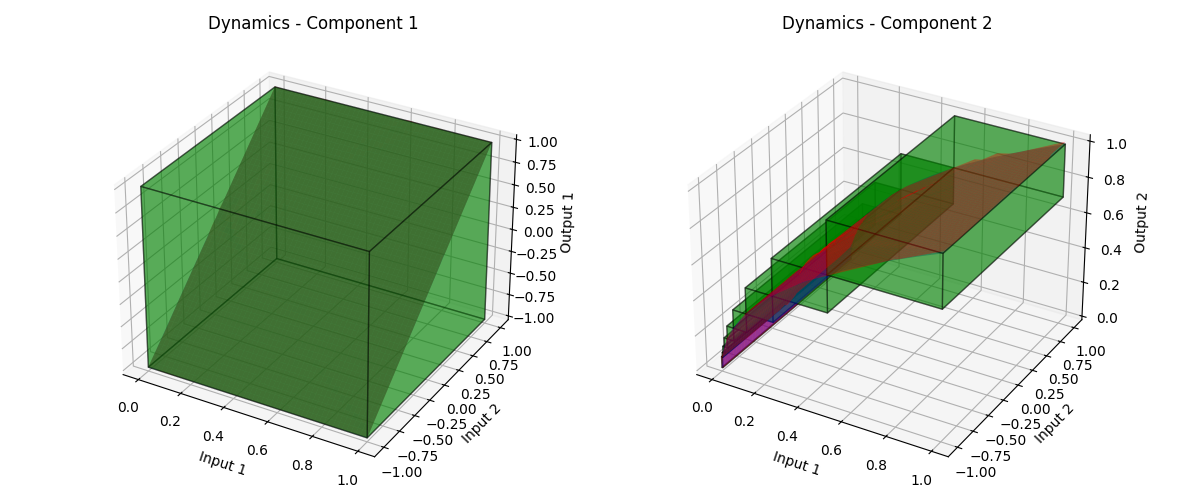}
    \caption{Partitioning of the domain to reduce conservatism. (\textit{Left}) Linear terms do not require partitioning, as they are captured accurately by the first-order model. (\textit{Right}) In regions of high nonlinearity (steeper dark function), finer rectangular partitioning reduces first-order approximation error by adaptively refining the domain.}
    \label{fig:non_lipschitz}
    \vspace{-10pt}
\end{figure}

Since the verification of \(\epsilon\)-closeness can be performed locally over partitions \(\mathcal{H}_{\delta}(c)\), we exploit this structure to parallelize the procedure across multiple processors, significantly improving performance. To facilitate parallel execution, we employ a shared stack accessed by a pool of worker processes. The domain \(\mathcal{X}\) is initially partitioned into a set of hyperrectangles, which are pushed onto the stack. Each process draws a hyperrectangle from the stack, performs the verification procedure described earlier, and either (i) certifies the region, (ii) marks it as uncertifiable if a counterexample is found, or (iii) splits the region as previously discussed. In the case of a split, the resulting subregions are pushed onto the stack. The process then retrieves the next hyperrectangle and repeats the procedure, as summarized in Figure~\ref{fig:verification}.
\begin{remark}
The use of a stack (Last-In-First-Out) instead of a queue (First-In-First-Out) corresponds to a depth-first rather than breadth-first exploration of the verification space, consistent with strategies from branch-and-bound algorithms \citep{Morrison2016}. A queue would be equally valid, though it would require more memory. If early termination with counterexamples is desired rather than verification until full coverage, \eg in the context of Counter-Example Guided Inductive Synthesis~\citep{Solar-Lezama2006, Abate_2018, NeuralAbstractions}, a priority queue can be employed where the hyperrectangles would be weighted by the error relative to their volume (Lebesgue measure). \qed 
\end{remark}

\section{Experimental Results}
\label{sec:experiments}
We empirically evaluate our approach on the benchmarks introduced in \cite{NeuralAbstractions} and described in Table~\ref{tab:full-verification-results}, and whose detailed dynamics can be found in Appendix~\ref{app:dynamical_systems}, as well as on new benchmarks designed to demonstrate the extended capabilities of our method. In particular, in what follows, we first present an empirical comparison with the method introduced in \cite{NeuralAbstractions} and based on dReal \citep{dReal}, which currently represents the state-of-the-art for neural abstractions; then, to highlight the generality and scalability of our approach, we include two particularly challenging tasks: (i) a neural network compression benchmark, where a network with 5 layers and 1024 neurons per layer is compressed to a network with 5 layers and 128 neurons per layer, achieving a \(98.4\%\) reduction in size; and (ii) a verification benchmark based on a trajectory prediction network introduced in \cite{DLKoopman, Lusch2018}, which learns to approximate nonlinear system dynamics through Koopman operator theory. These two benchmarks are discussed in Section~\ref{sec:compression} and~\ref{sec:koopman}, respectively. All experiments were executed on an Intel i7-6700k CPU (8 cores) with 16GB memory. The code for dynamics, training, and verification, together with pre-trained models stored as .onnx files and a Docker image, are available at \url{https://github.com/Zinoex/certified-neural-approximations}.
The randomness in all experiments controlled by explicitly setting seed for the pseudorandom number generators.
The hyperparameters are listed in Appendix~\ref{app:hyperparam}.

\subsection{Comparison with approaches based on \texttt{dReal}}

To benchmark our method against the state-of-the-art, we compare it with the approach of \cite{NeuralAbstractions}, which is based on \texttt{dReal} \citep{dReal}; an SMT-solver over nonlinear real arithmetic\footnote{\label{footnote:partial_cert}Note that our approach allows one to establish a certified subset of the domain, while the method in \cite{NeuralAbstractions} provides only a single counterexample. One application of partial certification is to cordon off uncertified regions and restrict the system to known, correct behaviours. Our approach can be further extended with a variable \(\epsilon\) over the domain of interest, allowing for a tighter certification in general.}. As evident from the results in Table~\ref{tab:full-verification-results}, our method scales to larger models (7D) more effectively than verification using \texttt{dReal}. This improved scalability arises as \texttt{dReal} is reasoning over nonlinear real arithmetic, while our method avoids this by reasoning over local linear approximations. 
Our approach successfully certifies all models with the 3x[64] architecture, while \texttt{dReal} exceeds the 1-hour timeout on all large models, with the exception of the \textit{WaterTank} and \textit{NonlinearOscillator}. Our method nevertheless achieves a noticeable speedup on all models (e.g. $\approx$820x faster for the \textit{WaterTank} experiment). 

\begin{table}
\centering
\caption{Verification Results for Learning Dynamical Systems\cref{footnote:partial_cert}.}
\begin{tabular}{lll|cc|cc}
\toprule
& & & \multicolumn{2}{c}{Our approach} & \multicolumn{2}{|c}{\texttt{dReal}} \\
\textbf{Model} & \textbf{Network} & \textbf{Dim} & \textbf{Certified (\%)} & \textbf{Time (s)} & \textbf{Result} & \textbf{Time (s)} \\
\midrule
WaterTank & [12] & 1 & 100.0 & 0.02 & \cmark & 0.02 \\
 & 3x[64] & 1& 100.0 & 0.56 & \cmark & 458.91 \\
\midrule 
JetEngine & [10, 16] & 2 & 100.0 & 4.35 & \cmark & 27.18 \\
 & 3x[64] & 2 & 100.0 & 19.27 & \multicolumn{2}{c}{Timeout (1h)} \\
 \midrule
SteamGovernor & [12] & 3 & 100.0 & 0.18 & \cmark & 39.37 \\
 & 3x[64] & 3 & 100.0 & 69.47 & \multicolumn{2}{c}{Timeout (1h)} \\
 \midrule
Exponential & 2x[14] & 2 & 100.0 & 0.23 & \cmark & 9.99 \\
 & 3x[64] & 2 & 100.0 & 3.92 & \multicolumn{2}{c}{Timeout (1h)} \\
 \midrule
NonLipschitzVectorField1 & [10] & 1 & 100.0 & 0.03 & \cmark & 0.03 \\
 & 3x[64] & 1 & 100.0 & 2.14 & \multicolumn{2}{c}{Timeout (1h)} \\
 \midrule
NonLipschitzVectorField2 &  [12, 10] & 2 & 100.0 & 0.08 & \cmark & 4.55 \\
 & 3x[64] & 2 & 100.0 & 11.93 & \multicolumn{2}{c}{Timeout (1h)} \\
 \midrule
VanDerPolOscillator & 3x[64] & 2 & 100.0 & 48.76 & \multicolumn{2}{c}{Timeout (1h)} \\
Sine2D & 3x[64] & 2 & 100.0 & 69.06 & \multicolumn{2}{c}{Timeout (1h)} \\
NonlinearOscillator & 3x[64] & 1 & 100.0 & 0.35 & \cmark & 234.52\\
LowThrustSpacecraft & 3x[64] & 7 & 100.0 & 94.51 & \multicolumn{2}{c}{Timeout (1h)} \\
\bottomrule
\end{tabular}
\label{tab:full-verification-results}
\end{table}

\subsection{Trajectory-Level Reasoning through Koopman Operators} \label{sec:koopman}
We now consider abstractions for discrete-time nonlinear systems. Instead of limiting the abstraction to predicting a single next state, however, we extend its task to predicting an entire trajectory—a sequence of future states from an initial condition. To facilitate trajectory-level reasoning, we shift to an operator-theoretic viewpoint of dynamical systems, wherein the evolution of a system is described through the action of a (linear) operator on measurement functions. This framework, known as \emph{Koopman theory}, offers a powerful lens for analysing complex, nonlinear systems \citep{Brunton2022}. Notably, Koopman theory provides a route to uncovering intrinsic coordinate systems in which the nonlinear dynamics manifest as linear. Originally introduced in~\cite{koopman1931}, the Koopman operator represents a nonlinear dynamical system via an infinite-dimensional linear operator acting on a Hilbert space of measurement functions. Despite the underlying system's nonlinearity, the Koopman operator is linear, and its spectral decomposition fully characterizes the system's behaviour~\citep{Brunton_2016, Brunton2022, korda_linear_2018}.

\begin{wrapfigure}{R}{0.4\textwidth} 
    \centering
    \includegraphics[width=0.4\textwidth, alt={Deep Koopman architecture with the initial state encoded by $E$. In latent space, the dynamics are evolved by a linear operator $\mathcal{K}$. The output/prediction at each time step is constructed by passing the latent state through a decoder $D$.}]{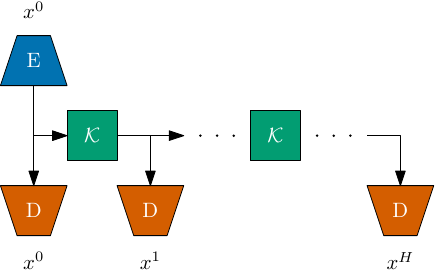}  %
    \caption{The network architecture used to learn a Koopman operator. The Encoder (blue) lifts the input into a higher dimensional space where linear multiplication with \(\mathcal{K}\) (green) propagates the system. To obtain trajectory points, $[x^0, x^1, \dots, x^{50}]$, each propagated state in the lifted space is decoded (orange).}
    \label{fig:autoencoder}
\end{wrapfigure}
In general, the Koopman operator is infinite-dimensional, making its exact computation intractable. Thus, the aim is commonly to construct finite-dimensional approximations that capture the dominant behaviour of the system. This entails identifying a low-dimensional invariant subspace spanned by eigenfunctions of the Koopman operator, within which the dynamics evolve linearly. Despite the promise of Koopman embeddings, obtaining tractable representations has remained a central challenge in control theory. Utilizing neural networks to discover and represent Koopman eigenfunctions has emerged as a promising approach in recent years~\citep{Lusch2018,DLKoopman}. While Koopman operators are commonly learned from data and a typical analysis of learned Koopman embeddings would verify structural properties---such as the orthonormality of eigenfunctions in Hamiltonian systems---such a treatment lies beyond the scope of this work. We instead demonstrate that our approach to verification can be applied to neural architectures deployed for learning Koopman embeddings, by verifying that the learned system evolution remains $\epsilon$-close to the true system dynamics. 

We adopt a standard setup using the \texttt{dlkoopman} library~\citep{DLKoopman}. The network architecture comprises an autoencoder that learns the encoding and decoding of states into a Koopman-invariant subspace, along with a linear transformation within that subspace (Figure~\ref{fig:autoencoder}). This network architecture can be interpreted as a discrete-time neural abstraction that advances the system ahead one-time step and outputs the state at time step \(k+1\) (Equation~\eqref{eq:disc_abstraction}). Thus, the output of the network is a trajectory, \ie, a sequence of $H$-subsequent states. Applications of such trajectory tasks are commonly found throughout control theory, notably in model predictive control~\citep{RawlingsMPC}.

We consider the Quadratic System provided in Appendix~\ref{sec:quadraticsystem}, a benchmark problem frequently studied in the literature \citep{H_Tu_2014,Brunton_2016,Lusch2018,DLKoopman}. To improve reproducibility, we leverage the dataset provided by \cite{DLKoopman} to learn the evolution of the system from data. The final trained model takes an initial state $x^0$ and produces the sequence $[x^0, x^1, \dots, x^{50}]$, which represents the evolution of the dynamics. The network architecture is summarized in Figure~\ref{fig:autoencoder}. Verification of the abstraction completed in 162.60 seconds with 29 counterexamples found and \(99.03\%\) of the domain certified. %
An interesting observation is that, although the network achieved a low prediction loss---specifically, a mean squared error of \(0.002\) on the validation set---our verification framework was still able to identify counterexamples where the prediction error exceeds the specified tolerance of \(\epsilon = 0.1\). At the same time, the method certifies that the network \(\epsilon\)-accurately predicts the system evolution over \(99.03\%\) of the input domain. Recall that in the presence of counterexamples, previous verification methods fail to identify regions where the model remains \(\epsilon\)-accurate. Meanwhile, our approach offers valuable insight by localizing the regions in which the model can still be trusted, even when global verification fails. We provide analysis and further discussion on the counterexamples in Appendix \ref{app:explanation_cex}.

\subsection{Compression of learned dynamics}
\label{sec:compression}
To demonstrate the capabilities of our approach beyond constructing abstractions of known analytical dynamics, we apply the verification procedure to a neural network compression benchmark.  This not only serves to demonstrate our approach's ability to handle networks with a large number of parameters but also showcases its broader applicability beyond the dynamical systems and control literature.

State-of-the-art techniques often produce large, over-parameterized neural networks. While highly accurate and implicitly regularized \citep{Martin2021, Belkin2019, Jacot2018}, these models present two major drawbacks that motivate the need for knowledge distillation: they are computationally expensive, which is problematic for applications like embedded systems, and they are difficult to analyze, \eg via the Piece-Wise Affine (PWA) representation induced by their ReLU structure \cite{Gou2021knowledge}. Neural network compression aims to mitigate this by reducing model size while preserving input-output behaviour \citep{luo2017thinet, pmlr-v235-memmel24a}. The key challenge, which motivates this benchmark, is formally guaranteeing that the compressed network stays \(\epsilon\)-close to the original.

For this compression benchmark, we first train a 5-layer ReLU network with 1024 neurons per layer to learn the dynamics of the Lorenz attractor (described in Appendix~\ref{app:dynamics_lorenz}) from observed trajectories. Using a simple teacher–student architecture \cite{Gou2021knowledge}, we reduce the model by independently training a smaller ReLU network, consisting of 5 layers with 128 neurons per layer, to replicate the input-output behaviour of the larger model---without access to its training trajectories or the underlying system dynamics. The larger model comprises \(4,205,571\) parameters, while the compressed model contains only \(66,947\), corresponding to a \(98.4\%\) reduction in size.
From the perspective of verification, the larger network serves as the reference dynamics, while the smaller network acts as an abstraction of those learned dynamics. This setup allows us to evaluate our method's ability to handle large-scale networks and non-analytical dynamics. 

To fit within our verification framework, we construct certified linearizations of the neural network dynamic model using CROWN \citep{zhang2022}. This is necessary since the network is not twice continuously differentiable, which is required to calculate Lagrange error bounds. CROWN computes (local) linear relaxations of a nonlinear function, particularly neural networks, by recursively relaxing nonlinearities on the computation graph corresponding to the function (see Appendix~\ref{app:taylor_crown} for details). 
The verification procedure was executed with $\epsilon = 0.6$ and completed in 89 hours and 13 minutes. In total, \(3,504,327\) hyperrectangles were checked and certified or further split according to the algorithm in Figure~\ref{fig:verification}.

\section{Conclusion}
\label{sec:conclusion}
We presented a method for certifying neural abstractions of dynamical systems using a parallelisable domain partitioning strategy in conjunction with local first-order models. This allows us to efficiently verify complex nonlinear systems without relying on expensive SMT solvers required to reason over nonlinear arithmetic. To address the fundamental limitation on scalability, our certificate refinement strategy verifies each output dimension independently. This approach ensures that local refinement is only performed on input dimensions that significantly affect a given output in a nonlinear manner. We demonstrated the effectiveness of our approach on several new challenging benchmarks, including network compression tasks and the verification of a trajectory prediction task based on Koopman linearization.

\section*{Acknowledgements}
We would like to thank Alec Edwards for the helpful discussions on his experience synthesizing neural abstractions, as well as Nandi Shoots for her insight at the start of this project.
Francesco's grant: EPSRC grant EP/Y028872/1, Mathematical Foundations of Intelligence: An “Erlangen Programme” for AI.
\bibliographystyle{plainnat}  %
\bibliography{references}

\newpage
\appendix

\section{Dynamical systems}\label{app:dynamical_systems}
\subsection{Water Tank}
A simple first-order nonlinear dynamical system modelling the water level in a tank with constant inflow and outflow dependent on the water pressure (proportional to the square root of height).
\begin{equation*}
\dot{x} = 1.5 - \sqrt{x}
\end{equation*}
where $x > 0$ represents the water level. For certification, we use \(\epsilon=0.097\) for the small network and a tighter \(\epsilon=0.007\) for the larger network.
The input domain for certification is \(\mathcal{X} = [0.1, 10.0]\).

\subsection{Jet Engine}\label{app:dynamics_jet_engine}
A two-dimensional nonlinear system with polynomial dynamics that models the behaviour of a simplified jet engine:
\begin{align*}
\dot{x} &= -y - 1.5x^2 - 0.5x^3 - 0.1 \\
\dot{y} &= 3x - y
\end{align*}
For certification, we use \(\epsilon=0.039\) for the small network and a tighter \(\epsilon=0.012\) for the larger network.
The input domain for certification is \(\mathcal{X} = [-1.0, 1.0]\times[-1.0, 1.0]\).

\subsection{Steam Governor}
A mechanical governor used in steam engines, formulated as a three-dimensional nonlinear system with trigonometric terms:
\begin{align*}
\dot{x} &= y \\
\dot{y} &= z^2 \sin(x)\cos(x) - \sin(x) - 3y \\
\dot{z} &= -(\cos(x) - 1)
\end{align*}
For the implementation we use the trigonometric identity $\sin(x)\cos(x) = \frac{1}{2}\sin(2x)$ to rewrite $\dot{y}$ as $ \frac{1}{2}z^2\sin(2x) - \sin(x) - 3y$. For certification, we use \(\epsilon=0.105\) for the small network and a tighter \(\epsilon=0.06\) for the larger network.
The input domain for certification is \(\mathcal{X} = [-1.0, 1.0]\times[-1.0, 1.0]\times[-1.0, 1.0]\).

\subsection{Exponential System}
The exponential system features highly nonlinear dynamics with nested nonlinearities combining trigonometric, exponential, and polynomial terms:
\begin{align*}
\dot{x} &= -\sin(e^{y^3 + 1}) - y^2 \\
\dot{y} &= -x
\end{align*}
For certification, we use \(\epsilon=0.112\) for the small network and a tighter \(\epsilon=0.04\) for the larger network.
The input domain for certification is \(\mathcal{X} = [-1.0, 1.0]\times[-1.0, 1.0]\).

\subsection{Non-Lipschitz Vector Field 1 (NL1)}
A non-Lipschitz continuous vector field:
\begin{align*}
\dot{x} &= y \\
\dot{y} &= \sqrt{x}
\end{align*}
where $x \geq 0$. 
For certification, we use \(\epsilon=0.11\) for the small network and a tighter \(\epsilon=0.03\) for the larger network.
The input domain for certification is \(\mathcal{X} = [0.0, 1.0] \times [-1.0, 1.0]\).

\subsection{Non-Lipschitz Vector Field 2 (NL2)}
A more challenging non-Lipschitz continuous vector field:
\begin{align*}
\dot{x} &= x^2 + y \\
\dot{y} &= (x^2)^{1/3} - x
\end{align*}
For certification, we use \(\epsilon=0.081\) for the small network and a tighter \(\epsilon=0.02\) for the larger network.
The input domain for certification is \(\mathcal{X} = [-1.0, 1.0]\times[-1.0, 1.0]\).

\subsection{Van der Pol Oscillator}
The classical Van der Pol oscillator with nonlinear damping:
\begin{align*}
\dot{x}_1 &= x_2 \\
\dot{x}_2 &= \mu(1 - x_1^2)x_2 - x_1
\end{align*}
where $\mu > 0$. For certification, we use \(\epsilon=0.25\).
The input domain for certification is \(\mathcal{X} = [-3.0, 3.0]\times[-3.0, 3.0]\).

\subsection{Sine 2D System}
The Sine 2D system represents a two-dimensional nonlinear oscillator with sinusoidal coupling:
\begin{align*}
\dot{x} &= \sin(\omega_y \cdot y) \\
\dot{y} &= -\sin(\omega_x \cdot x)
\end{align*}
with parameter values $\omega_x = 1.0, \omega_y = 0.5$. For certification, we use \(\epsilon=0.02\).
The input domain for certification is \(\mathcal{X} = [-\pi, \pi]\times[-\pi, \pi]\). We utilize a LeakyReLU activation function for networks learning the Sine 2D System, both to improve learning accuracy and to demonstrate the applicability of our approach beyond standard ReLU activation functions.

\subsection{Nonlinear Oscillator}
The nonlinear oscillator combines linear, cubic, and sinusoidal terms:
\begin{equation*}
\dot{x} = -ax - bx^3 + c\sin(x)
\end{equation*}
with parameter values $a=1.0, b=1/2, c= 0.3$. For certification, we use \(\epsilon=0.165\).
The input domain for certification is \(\mathcal{X} = [-3.0, 3.0]\).

\subsection{Lorenz Attractor}\label{app:dynamics_lorenz}
The three-dimensional Lorenz Attractor, famous for exhibiting chaotic behaviour:
\begin{align*}
\dot{x} &= \sigma(y - x) \\
\dot{y} &= x(\rho - z) - y \\
\dot{z} &= xy - \beta z
\end{align*}
with parameter values $\sigma = 10$, $\rho = 28$, and $\beta = 8/3$.
The input domain for certification is \(\mathcal{X} = [-30.0, 30.0]\times[-30.0, 30.0]\times[0.0, 60.0] \)

\subsection{Low Thrust Spacecraft}
The dynamics of spacecraft with continuous low-thrust propulsion on a planar orbit around Earth. The system has $5$ states ($r, \theta, v_r, v_\theta, \Delta m$) and $2$ control inputs ($T, \alpha$).
\begin{align*}
\dot{r} &= v_r \\
\dot{\theta} &= \frac{v_{\theta}}{r} \\
\dot{v_r} &= -\frac{\mu}{r^2} + \frac{v{\theta}^2}{r} + \frac{T \cdot \cos(\alpha)}{m_0 + \Delta m} \\
\dot{v_\theta} &= -\frac{v_r \cdot v{\theta}}{r} + \frac{T \cdot \sin(\alpha)}{m_0 + \Delta m} \\
\dot{m} &= -\frac{T}{v_{\mathrm{exhaust}}}
\end{align*}
where:
\begin{itemize}
\item $r$ is the radial distance from the central body
\item $\theta$ is the azimuthal angle
\item $v_r$ is the radial velocity component
\item $v_{\theta}$ is the tangential velocity component
\item $\Delta m$ is the propellant mass 
\item $T$ is the magnitude of the thrust
\item $\alpha$ is the angle of the applied thrust
\item $\mu$ is the gravitational parameter of the central body
\item $m_0$ is the initial spacecraft mass
\item $v_{\mathrm{exhaust}}$ is the propellant exhaust velocity
\end{itemize}

\subsection{Quadratic System Dynamics and Discrete-Time Solution Derivation} \label{sec:quadraticsystem}
A simple system governed by the continuous time dynamics:
\begin{align*}
\dot{x}_1 &= \mu x_1 \\
\dot{x}_2 &= \lambda (x_2 - x_1^2)
\end{align*}
where \( x_1 \) and \( x_2 \) are the state variables, and \( \mu \) and \( \lambda \) are system parameters. The system includes a linear term for \( x_1 \) and a quadratic term involving \( x_1^2 \) in the equation for \( x_2 \). For training, the initial conditions there chosen at random and the trajectory is computed over the time horizon $[0, 1]$, with parameters $\mu = -0.05, \lambda = -1$ and timestep of \(0.02\). 
The input domain for certification is \(\mathcal{X} = [-0.5, 0.5]\times[-0.5, 0.5]\).

To generate trajectories of the system, we integrate the system numerically and sample the trajectory evenly across the time horizon. For the purpose of verification, we derive the analytic expression of the discrete-time system. The differential equation for \( x_1(t) \) yields the solution
\[
x_1(t) = x_1(0) e^{\mu t}
\]
Solving the differential equation for \( x_2(t) \) with \( 2\mu \neq \lambda \) yields
\[
x_2(t) = \left( x_2(0) + \frac{\lambda x_1(0)^2}{2\mu - \lambda} \right) e^{\lambda t} - \frac{\lambda x_1(0)^2}{2\mu - \lambda} e^{2\mu t}
\]
For a discrete time step \( \Delta t \), we obtain the discrete-time system is:
\begin{align*}
    x_1^{n+1} &= x_1^n e^{\mu \Delta t} \\
    x_2^{n+1} &= \left( x_2^n + \frac{\lambda (x_1^n)^2}{2\mu - \lambda} \right) e^{\lambda \Delta t} - \frac{\lambda (x_1^n)^2}{2\mu - \lambda} e^{2\mu \Delta t}
\end{align*}
where \(\Delta = 0.02\) in the experiments.

\section{Hyperparameters}\label{app:hyperparam}
For all experiments, the architecture is described in Section \ref{sec:experiments} and the networks are trained with the loss function defined in Section \ref{sec:training} using the AdamW optimizer with a weight decay of $1e{-4}$.
The gradient is clipped to a norm of 1 if the norm exceeds this limit.

For the experiment comparing the proposed framework with dReal, the learning rate is initialized at $1e{-3}$ and reduced according to a cosine annealing schedule to a minimum of $1e{-6}$ over $50\,000$ iterations. The data is sampled at each iteration uniformly over the domain $\mathcal{X}$, with a batch size of 4096.

For the compression benchmark, the two networks are trained with different parameters. First, the large-scale network is trained from steps of the Lorenz attractor with a discrete time step $\Delta t = 0.02$, obtained as trajectories using an RK45 integrator for 32 time steps and 128 different initial conditions in $\mathcal{X}$, for a batch size of 4096. The learning rate is initialised to $1e{-6}$ and reduced by a factor of 0.9 when encountering a loss plateau for 2000 iterations. The network is trained for $500\,000$ iterations. The compressed network is trained with a fixed learning rate of $1e{-6}$ for $1\,000\,000$ iterations where data is sampled at each iteration uniformly over the domain $\mathcal{X}$, with a batch size of 4096.

For the Koopman benchmark, the network is trained for 200 epochs over a dataset of 10500 trajectories, with a batch size of 125, and with a weight decay of $1e{-6}$.

\section{Proof of Theorem~\ref{thm:1}}\label{app:proof_thm_na}
\begin{proof}
We can express \( f(x) \) as:
\[
    f(x) = f(c) + J_f(c)(x - c) + r,
\]
where \( r \in [\mathcal{R}^{\mathrm{min}}, \mathcal{R}^{\mathrm{max}}] \). If no satisfying assignment exists for Equation~\eqref{eq:cond_linf_norm1}, it follows that:
\[
    f_j(c) + J_{f_j}(c)(x - c) + \mathcal{R}^{\mathrm{max}}_j - \mathrm{N}_j(x) < \epsilon.
\]
Since \( f_j(c) + J_{f_j}(c)(x - c) + \mathcal{R}^{\mathrm{max}}_j \) provides an upper bound for \( f_j(x) \), we have:
\[
    f_j(x) - \mathrm{N}_j(x) < \epsilon.
\]
Similarly, for the lower bound, Equation~\eqref{eq:cond_linf_norm2}:
\[
    \mathrm{N}_j(x) - f_j(x) < \epsilon.
\]
These bounds hold for all \( j \in \{1, \ldots, m\} \). Therefore, when no satisfying assignment is found for all \( j \in \{1, \ldots, m\} \), it follows that:
\[
    \| f(x) - \mathrm{N}(x) \| \leq \epsilon, \quad \forall x \in \mathcal{H}_\delta(c).
\]
\end{proof}

\section{Certified Linearizations}\label{app:certified_lin}
\subsection{Certified taylor expansions of elementary functions}\label{app:taylor_smooth}
Suppose that $f: \mathbb{R}^n \to \mathbb{R}^m$ is composed of smooth elementary functions and is at least twice continuously differentiable. We consider the first-order Taylor approximation of $f$ around a point $x_0 \in \mathbb{R}^n$:
\[
    f(x) \approx f(x_0) + J_f(x_0)(x - x_0),
\]
where $J_f(x_0)$ is the $m \times n$ Jacobian matrix of $f$ at $x_0$. We define the remainder $\mathcal{R}(x) \in \mathbb{R}^m$ componentwise for each $j \in \{1, \dots, m\}$:
\[
    \mathcal{R}_j(x) = f_j(x) - \left[ f_j(x_0) + J_{f_j}(x_0) (x - x_0) \right].
\]
This remainder captures the contribution of second and higher-order terms. By the \emph{Lagrange form} of the Taylor remainder, we have:
\[
    \mathcal{R}_j(x) = \frac{1}{2} (x - x_0)^\top \nabla^2 f_j(\xi) (x - x_0),
\]
for some $\xi$ on the line segment between $x_0$ and $x$. Here $\nabla^2 f_j(\xi)$ is the $n \times n$ Hessian matrix of the $j$-th component function. To bound the magnitude of the remainder, we find a constant $M_j$ (bounding the spectral norm of the Hessian):
\[
    \| \nabla^2 f_j(x) \|_2 \leq M_j \quad \text{for all } x \in \mathcal{H}_{\delta}(c),
\]
where $\mathcal{H}_{\delta}(c)$ is the compact, convex hyperrectangle containing $x_0$ and $x$. Then,
\[
    |\mathcal{R}_j(x)| \leq \frac{1}{2} M_j \| x - x_0 \|_2^2.
\]

By simple application of the chain rule, we can similarly bound compositions, i.e., $f(g(x))$. Let $y = g(x)$ be the input function (where $g: \mathbb{R}^n \to \mathbb{R}^m$) with its own expansion centered at $x_0$, and $f: \mathbb{R}^m \to \mathbb{R}^m$ be the elementary function we are applying (element-wise). We define $y_0 = g(x_0) \in \mathbb{R}^m$ as the center for the expansion of $f$.

The total remainder for the composition $f(g(x))$ is a vector $\mathcal{R}_{\text{total}}(x) \in \mathbb{R}^m$:
\[
    \mathcal{R}_{\text{total}}(x) = \mathcal{R}_{\text{prop}}(x) + \mathcal{R}_{\text{local}}(x)
\]
This splits the remainder into two parts:
\begin{enumerate}
    \item \textbf{Propagated Remainder:} $\mathcal{R}_{\text{prop}}(x) = J_f(y_0) \mathcal{R}_g(x)$. This term propagates the remainder of the input function, $\mathcal{R}_g(x) \in \mathbb{R}^m$, via the Jacobian of $f$. Since $f$ is an element-wise function (e.g., $e^x$), its Jacobian $J_f(y_0)$ is a diagonal matrix:
    \[
        J_f(y_0) = \text{diag}\left(f'(y_{0,1}), \dots, f'(y_{0,m})\right)
    \]
    \item \textbf{Local Remainder:} $\mathcal{R}_{\text{local}}(x) = \mathcal{R}_f(g(x))$. This term is the local remainder of the elementary function $f$ itself, evaluated at the input $y = g(x)$.
\end{enumerate}

To produce tight bounds for $\mathcal{R}_f(y)$ (element-wise), we leverage properties over the input hyperrectangle $I_y = [y_{\min}, y_{\max}]$:
\begin{itemize}
    \item If $f$ is \textbf{convex} on $I_y$ (i.e., $f''(y_j) \ge 0$ for all $y_j \in [y_{j,\min}, y_{j,\max}]$), the linear approximation is an underestimate. The local remainder $\mathcal{R}_f(y)$ is non-negative.
    \begin{align*}
        \mathcal{R}_{\min} &= \mathbf{0} \\
        \mathcal{R}_{\max} &= \max\left( f(y_{\min}) - f_L(y_{\min}), f(y_{\max}) - f_L(y_{\max}) \right)
    \end{align*}
    \item If $f$ is \textbf{concave} on $I_y$ (i.e., $f''(y_j) \le 0$), the linear approximation is an overestimate. The local remainder $\mathcal{R}_f(y)$ is non-positive.
    \begin{align*}
        \mathcal{R}_{\min} &= \min\left( f(y_{\min}) - f_L(y_{\min}), f(y_{\max}) - f_L(y_{\max}) \right) \\
        \mathcal{R}_{\max} &= \mathbf{0}
    \end{align*}
\end{itemize}
All operations ($\max$, $\min$, $f_L$, $f$) are applied element-wise.

\begin{figure}[h]
    \centering
    \includegraphics[width=\linewidth]{Figures/taylor/elementary_functions_comparison_grid.png}
    \caption{Taylor expansions of common elementary functions}
    \label{fig:talyor}
\end{figure}

For functions with a known, fixed global range, such as $\sin(y)$ and $\cos(y)$ where $f(y) \in [-1, 1]$, or for monotonically increasing/decreasing functions where we can utilise the fact that the maximum/minimum of the $f(y)$ is easily found by checking the boundaries of the domain of interest, we can perform an additional, post-processing step to tighten the remainder bounds, i.e. we clip  $ R_f(y)$ to the domain
\[
    R_f(y) \in \left[ L - f_L(y), U - f_L(y) \right] \subseteq \left[ L - \max_{y} f_L(y), U - \min_{y}  f_L(y) \right],
\]
where $f(y) \in [L, U]$.

\subsection{Using CROWN}\label{app:taylor_crown}
\begin{figure}
    \centering
    \includegraphics[width=0.7\linewidth]{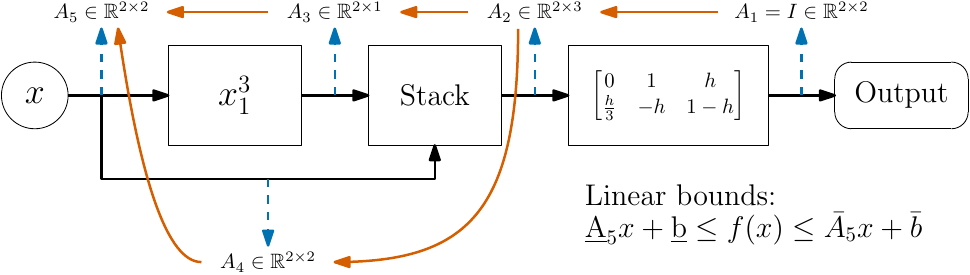}
    \caption{The computation graph with CROWN annotation for function $f(x) = \begin{bmatrix}
        x_1 + h x_2 \\
        x_2 + h(\frac{1}{3}x_1^3 - x_1 - x_2)
    \end{bmatrix}$.}
    \label{fig:crown_computation_graph}
\end{figure}

If the function $f$ is not twice continuously differentiable, e.g. for a ReLU network, then the Lagrange bound is not valid to compute the Taylor remainder. Instead, we employ CROWN \citep{zhang2022}, also known as Linear Bound Propagation, which was originally developed to verify neural networks via linear relaxations of the network. CROWN comes in several flavours including forward-mode (similar to the Taylor bound propagation above), backward-mode, forward-backward-mode (relaxations via forward mode), CROWN-IBP (relaxations via Interval Bound Propagation), \(\alpha\)-CROWN (optimization of bounds), \(\beta\)-CROWN (neuron splitting branch-and-bound), and GCP-CROWN (general cutting planes). We only employ backward mode, which is the original version. In the remainder, when we refer to CROWN we mean backward-mode CROWN.

The idea of CROWN is to operate on the computation graph and relax nonlinearities based on node local input intervals, which can be computed using CROWN itself recursively. Figure \ref{fig:crown_computation_graph} exemplifies this process on a composition of polynomial functions, for ease of exposition. First, the nonlinear term \(x^3\) is locally relaxed to upper and lower linear functions based on the input range. Then, starting from the output with the linear functions $\underline{A} y + \underline{b} = Iy + 0$ and $\overline{A} y + \overline{b} = Iy + 0$, the bounding functions are propagated backward through the computation graph to the input. If the computation graph contains multiple nonlinearities, using the backward propagation from each nonlinearity to the input, the input range to each node is calculated to locally relax it. We refer to \citep{xu2020automatic} for details on how to compute local relaxations based on input bounds and how to propagate through linear and locally relaxed nonlinear operations.

\subsection{Using Lipschitz constants}
While conservative, it is possible to construct linear relaxations from the (local) Lipschitz constant of the function $f$, if such exists, and evaluation of the function at a point $c$.
The approach is shown in the following constructive proof.
\begin{proposition}\label{prop:linear_relaxations_existence_lipschitz}
    For any function $f$ locally Lipschitz in a hyperrectangle $\mathcal{H}_\delta(c) \subset \mathbb{R}^n$ with the Lipschitz constant $L_f$, there exist linear relaxations $\underline{A}x + \underline{b}$ and $\overline{A}x + \overline{b}$ of $f$ in $\mathcal{H}_\delta(c)$.
\end{proposition}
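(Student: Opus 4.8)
The plan is to give a direct, constructive proof built on the crudest possible relaxation: constant (zero-slope) affine bounds. A linear relaxation of $f$ on $\mathcal{H}_\delta(c)$ means a pair of affine maps $\underline{A}x + \underline{b}$ and $\overline{A}x + \overline{b}$ satisfying $\underline{A}x + \underline{b} \leq f(x) \leq \overline{A}x + \overline{b}$ element-wise for every $x \in \mathcal{H}_\delta(c)$. Since the proposition only asserts \emph{existence}, it suffices to exhibit one such pair, and the local Lipschitz bound at the center $c$ is exactly what lets us do so without any differentiability assumption on $f$.

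First I would invoke the Lipschitz property to control the deviation of $f$ from its value at the center. For every $x \in \mathcal{H}_\delta(c)$ we have $\|f(x) - f(c)\| \leq L_f \|x - c\|$, and because $x$ lies in the hyperrectangle, $\|x - c\|_\infty \leq \max_i \delta_i =: \rho$. Passing to individual output coordinates yields the uniform componentwise bound $|f_j(x) - f_j(c)| \leq L_f \rho$ for all $j$ and all $x \in \mathcal{H}_\delta(c)$. Next I would set $\underline{A} = \overline{A} = 0$ (the $m \times n$ zero matrix) and define the offset vectors $\underline{b} = f(c) - L_f \rho \mathbf{1}$ and $\overline{b} = f(c) + L_f \rho \mathbf{1}$, where $\mathbf{1} \in \mathbb{R}^m$ is the all-ones vector. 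The required sandwich $\underline{b} \leq f(x) \leq \overline{b}$ then follows immediately from the componentwise bound, completing the construction.

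The main point to handle carefully---more a bookkeeping subtlety than a genuine obstacle---is consistency of norm conventions between the Lipschitz constant and the hyperrectangle geometry. The radius $\delta$ describes an $L_\infty$-ball, so I must bound $\|x - c\|$ by the appropriate quantity ($\max_i \delta_i$ if $L_f$ is an $L_\infty$-Lipschitz constant, or $\|\delta\|_2$ if it is an $L_2$ constant) to preserve soundness; in either case a single scalar bound $L_f \rho$ dominating the variation of every output coordinate suffices. I would also note explicitly that no tightness is claimed: the zero-slope bounds are maximally conservative, consistent with the surrounding remark that this Lipschitz-based relaxation is weaker than the CROWN or Lagrange-based Taylor constructions, but it has the advantage of requiring nothing beyond local Lipschitz continuity and a single evaluation $f(c)$.
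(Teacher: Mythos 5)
Your construction is correct and coincides with the paper's own proof: both take the trivial zero-slope relaxation $\underline{A}=\overline{A}=0$ with offsets $f(c)\pm L_f\rho\,\mathbf{1}$, where $\rho=\max_i\delta_i$ is exactly the quantity the paper calls $M=\sup_{x\in\mathcal{H}_\delta(c)}\lVert x-c\rVert_\infty$. Your added remark on keeping the norm conventions consistent is a sensible clarification but does not change the argument.
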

\begin{proof}
    We prove the statement by construction. A (local) Lipschitz constant $L_f$ of $f$ in $\mathcal{H}_\delta(c)$ implies that 
    \begin{equation}
        \lVert f(x_1) - f(x_2) \rVert_{\infty} \leq L_f \lVert x_1 - x_2 \rVert_{\infty}, \quad \text{for all } x_1, x_2 \in \mathcal{H}_\delta(c).
    \end{equation}
    This limit to the rate of change implies the following component-wise bounds
    \begin{equation}
        f(c) - L_f \lVert x - c \rVert_{\infty} \cdot \mathbf{1} \leq f(x) \leq f(c) + L_f \lVert x - c \rVert_{\infty} \cdot \mathbf{1}, \quad \text{for all } x \in \mathcal{H}_\delta(c),
    \end{equation}
    where $ \mathbf{1} \in \mathbb{R}^m $ is a vector of all ones.
    Let $M = \sup_{x \in \mathcal{H}_\delta(c)} \lVert x - c \rVert_{\infty} = \lVert r \rVert_{\infty}$. Then, we obtain the linear (interval) relaxations
    \begin{equation}
         \underline{b} = f(c) - L_f M \cdot \mathbf{1} \leq f(x) \leq f(c) + L_f M \cdot \mathbf{1} = \overline{b}, \quad \text{for all } x \in \mathcal{H}_\delta(c),
    \end{equation}
    where $\underline{A} = \overline{A} = 0$.
    Thus, trivial linear relaxations always exist for any locally Lipschitz continuous function.
\end{proof}

\section{Analysis of Counterexamples in the Koopman Benchmark}
\label{app:explanation_cex}
In Section~\ref{sec:koopman}, we reported that the Koopman verification benchmark certified 99.03\% of the domain and identified 29 counterexamples. We provide a brief analysis of these counterexamples to illustrate the advantages of having access to both certified regions and a set of counterexamples. This is in contrast with the capabilities of SMT solvers, which typically provide only a single counterexample and no information about certified regions.

It is important to emphasise that a counterexample consists of an input together with a \emph{single} output dimension in which the $\epsilon$-closeness condition is violated. As a result, the 29 counterexamples obtained in the benchmark correspond to only two distinct inputs. These two inputs violate the $\epsilon$-closeness condition across multiple output dimensions. Recall the structure of the Koopman benchmark: the input is an initial point in the state-space, while the outputs are states along a trajectory. Thus, if one state along the trajectory is erroneous, there is an increased likelihood of repeated counterexamples across output dimensions (i.e., along the trajectory) --- a failure to correctly predict a trajectory corresponds to multiple states along the trajectory deviating from the ground truth values.

Further analysing the counterexamples, we note that the counterexamples lie near a corner of the state space, regions where counterexamples are most likely to arise, as such areas are often underrepresented during training.

An increased focus on corners of the state space or expanding the state space during training would likely improve the performance of the certified neural approximation. Alternatively, a larger or state-dependent epsilon could be chosen if the observed behaviour were to be considered sufficient.

\end{document}